\newtheorem{definition}{Definition}
\newtheorem{theorem}{Theorem}
\newtheorem{proposition}{Proposition}
\begin{document}

\title[Periodic Spectral Ergodicity]{Periodic Spectral Ergodicity: A Complexity Measure for Deep Neural Networks and Neural Architecture Search}

\author{Mehmet S\"uzen}
\address{Member ACM and IoP}
\ead{suzen@acm.org;mehmet.suzen@physics.org}
\author{J. J. Cerd{\`a}}
\address{Department of Physics, University of the Balearic Islands, Crta. de Valldemossa, km 7.5, 07122 Palma (Illes Balears), Spain}
\ead{jj.cerda@uib.cat}
\author{Cornelius Weber}
\address{Department of Informatics, Faculty of Mathematics and Natural Sciences, University of Hamburg, Germany}
\ead{weber@informatik.uni-hamburg.de}
\vspace{10pt}
\begin{indented}
\item[]January 2020
\end{indented}

\begin{abstract}
Establishing associations between the structure and the generalisation ability of deep neural
networks (DNNs) is a challenging task in modern machine learning. Producing solutions to
this challenge will bring progress both in the theoretical understanding of DNNs and in
building new architectures efficiently. In this work, we address this challenge by developing
a new complexity measure based on the concept of {Periodic Spectral Ergodicity} (PSE) originating 
from quantum statistical mechanics. Based on this measure a technique is devised to quantify
the complexity of deep neural networks from the learned weights and traversing the network connectivity 
in a sequential manner, hence the term cascading PSE (cPSE), as an empirical complexity measure. 
This measure will capture both topological and internal neural processing complexity simultaneously.
Because of this cascading approach, i.e., a symmetric divergence of PSE on the consecutive layers, 
it is possible to use this measure for Neural Architecture Search (NAS). We demonstrate 
the usefulness of this measure in practice on two sets of vision models, ResNet and VGG, and sketch 
the computation of cPSE for more complex network structures.
\end{abstract}

%
\vspace{2pc}
\noindent{\it Keywords}: Complexity, Spectral Ergodicity, Deep Neural Networks

%
%
%
\newpage

\section{Introduction}
Complexity measures appear in multiple fields from physics to medicine in the design of
artificial systems, the understanding of natural phenomena and the detection of signals \cite{pincus91a, 
shiner1999a, bandt2002a, zurek18a}. One of the prominent example systems requiring robust complexity 
measures appear to be machine learning systems. Their recent success in producing learning systems 
exceeding human ability in some tasks is attributed to deep learning \cite{schmidhuber15a, lecun2015a, mnih15a}, 
i.e. Deep Neural Networks (DNNs). However, understanding complexity of DNNs both in structure and 
from learning theory perspective lagged behind their practical engineering success. LeCun \cite{lecun2017my} 
has made an analogy that this situation resembles lack of theory of thermodynamics and the success of 
thermal machines in early industrial revolution.

Sophisticated neural network architectures used in deep learning are still built by human experts
who are usually highly mathematically minded. Understanding the guiding principles in such
design process will help experts to increase their efficiency in producing
superior architectures. Neural Architecture Search (NAS) methods have already been
proposed \cite{elsken19a} in this direction. NAS methods usually operate on the search space,
with a search strategy and a guiding performance estimation strategy.  Embedding complexity
measures in both search strategy and performance estimation strategy could accelerate the NAS frameworks.
Complexity measures for supervised classification were suggested \cite{ho2002a}, however these
measures do not specifically address deep learning architectures. Recently a complexity measure for
characterising deep learning structural complexity is proposed based on topological
data analysis \cite{bianchini2014a, rieck2019a}, which can be used in NAS. It requires
embedding the computation of the measure into the learning algorithm.

In this work, we extend a definition of spectral ergodicity for an ensemble of
matrices \cite{suzen17a} to handle different sized matrices within the same ensemble. This is
possible by defining a periodicity on the eigenvalue vectors via a method, so
called {\it Periodic Spectral Ergodicity (PSE)} on the learned weights. Our main
mathematical object is defined as a complexity measure for the neural network based on PSE.
This measure reflects structure of the network with a cascading computation of PSEs
on the consecutive layers. This leads to a complexity of DNNs measure called cascading PSE (cPSE).
A reason why deep neural networks generalise better with over-parametrisation is a challenge
for generic setting \cite{behnam19a}, cPSE addresses this as well.

The usage of spectral properties of neural networks, i.e., learned weight matrices
or Hessian matrix, has appeared recently in novel works in an attempt to build a strong theoretical foundation
for DNNs \cite{pennington17a, sagun18a, pennington18a, martin18a,martin19a, martin19b} and extracting feature
interaction \cite{tsang17a}. Current work follows a similar ethos.

\section{Periodic spectral ergodicity}

Spectral ergodicity originates from quantum statistical mechanics \cite{jackson01a}.
In a recent inception it is attributed as a reason why deep neural networks
perform in high accuracy, demonstrated on random matrix ensembles as surrogate to weight matrix
ensembles \cite{suzen17a}. In that work, only fixed size complex matrix ensembles were
used to measure the ensemble's approach to spectral ergodicity. Changing matrix dimensions
in ensembles corresponds to increasing the size of a single layer, as an interpretation.
Circular ensembles were used in that study as surrogate matrices having close
to unit spectral radius. In trained neural network's weight matrices, it is known that unit spectral 
radius prevents gradient instabilities. However informative, this approach was short of
direct usage in real practical networks even though it has given empirical evidence of usefulness
of spectral ergodicity.  One shortcoming was how to handle different size weight matrices and the second one was
how to transform complex layer structures like multi-dimensional convolutional units to a weight matrix.

Having multi-dimensional connections between two consecutive layers leads to a tensor representation
of the deep neural network. For this reason, we defined a {\it Layer Matrix Ensemble}  $\mathscr{L}^{m}$,
that is coming from $m$ layer connections via mapping from multi-dimensional connections, i.e.,
trained weights, to two-dimensional square matrices $X_{l}$ of size $N_{l} \times N_{l}$
where $N_{l} \ge 2$, see Definition \ref{LME}. In the case of linear connections, this mapping
acts as a unit transformation. This would enable us to generate a layer matrix ensemble or simply
weight matrices of any trained deep neural network architecture.

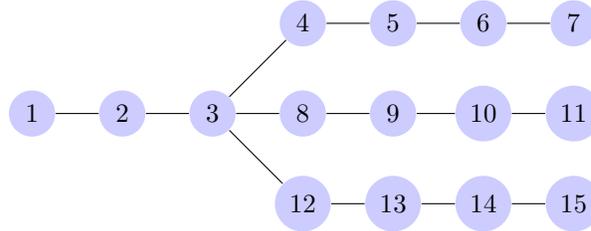
\begin{figure}
  \centering
\begin{tikzpicture}
  [scale=.4,auto=left,every node/.style={circle,fill=blue!20}]
  \node (n1) at (1,6)  {1};
  \node (n2) at (4,6)  {2};
  \node (n3) at (7,6)  {3};
  \node (n4) at (10,9){4};
  \node (n5) at (13,9){5};
  \node (n6) at (16,9){6};
  \node (n7) at (19,9){7};
  \node (n8) at (10,6) {8};
  \node (n9) at (13,6) {9};
  \node (n10) at (16,6){10};
  \node (n11) at (19,6){11};
  \node (n12) at (10,3){12};
  \node (n13) at (13,3){13};
  \node (n14) at (16,3){14};
  \node (n15) at (19,3){15};
  \foreach \from/\to in {n1/n2, n2/n3, n3/n4, n3/n8, n3/n12, n8/n9, n9/n10, n10/n11,
                        n4/n5, n5/n6, n6/n7, n12/n13, n13/n14, n14/n15}
    \draw (\from) -- (\to);
\end{tikzpicture}
  \caption{
           A sample arbitrary architecture {\it a fork}. In the fork each edge represents an arbitrary DNN layer.
           This architecture is a special case of fully connected feedforward architecture,
           certain connections are removed.
          }
\label{fork}
\end{figure}

\begin{definition}{Layer Matrix Ensemble $\mathscr{L}^{m}$} \\
\label{LME}
The weights $W_{l} \in \mathbb{R}^{p_{1} \times p_{2} \times ... \times p_{n}}$ are obtained from
a trained deep neural network architecture's layer $l$ as an $n$-dimensional Tensor. A Layer
Matrix Ensemble $\mathscr{L}^{m}$ is formed by transforming $m$ set of weights $W_{l}$ to square matrices
$X_{l} \in \mathbb{R}^{N_{l} \times N_{l}}$, that $X_{l} = A_{l} \cdot A_{l}^{T}$ and
$A_{l} \in \mathbb{R}^{N_{l} \times M_{l}}$ is marely a stacked up version of
$W_{l}$ where $n > 1$, $N_{l}=p_{1}$, $M_{l}= \prod_{j=2}^{n} p_{j}$ and
$p_{j},n, m, N_{l}, M_{l}, j \in \mathbb{Z}_{+}$. Consequently $\mathscr{L}^{m}$ will
have $m$ potentially different $N_{l}$ size square matrices $X_{l}$ of at least
size $2 \times 2$.
\end{definition}

The original definition of spectral ergodicity for DNNs \cite{suzen17a} relies on
the same size squared matrices in the matrix ensemble. We proceed with applying
 periodic or cyclic conditions to all eigenvalue vectors, obtaining periodic
eigenvalue vectors  $\mathscr{E}^{m}$, see Definition \ref{pev}. This preprocessing
step produces a suitable dataset extracted from trained neural network weights, preserving
layer order, that can be used in computing approach to spectral ergodicity.

\begin{definition}{Periodic eigenvalue vectors $\mathscr{E}^{m}$} \\
\label{pev}
Given layer matrix ensemble $\mathscr{L}^{m}$, the set of eigenvalue vectors
of length $N_{l}$ is computed, eigenvalues of $X_{l}$ as ${\it e_{l}}$. Naturally, not all
vectors will be same length. In order to align the different size vectors, first we
identify the maximum length eigenvalue vector, $l_{max} = \max | {\it e}_{l}|$. Then, we
align all other eigenvalue vectors by applying cyclic boundary condition, meaning
that replicating them up to length $l_{max}$, simply N hereafter.
\end{definition}

Given layer $L$ and $N$ eigenvalues, a spectral density $\rho_{j}$ is computed for all $j$ layers from first to the
layer $L$ using periodic eigenvectors $\mathscr{E}^{m}$. Here a layer corresponds to the weight matrices
in the layer matrix ensemble. Spectral ergodicity at layer $L$ is defined as a distribution extracted from spectral densities,
\begin{equation}
\label{omega}
\Omega^{L} = \Omega^{L} (b_{k})= \frac{1}{L \cdot N} \sum_{j=1}^{L} \left[ \rho_{j}(b_{k}) - \overline{\rho^{L}(b_{k})} \right]^{2},
\end{equation}
where $b_{k}$ are histogram bin centres, and $j$ represents layer $l \le L$. The mean spectral density up to layer L is given by
$$ \overline{\rho^{L}(b_{k})} = \frac{1}{L} \sum_{j=1}^{L} \rho_{j}(b_{k}).$$

\begin{figure}
  \label{fig:vgg}
  \includegraphics[width=0.5\textwidth]{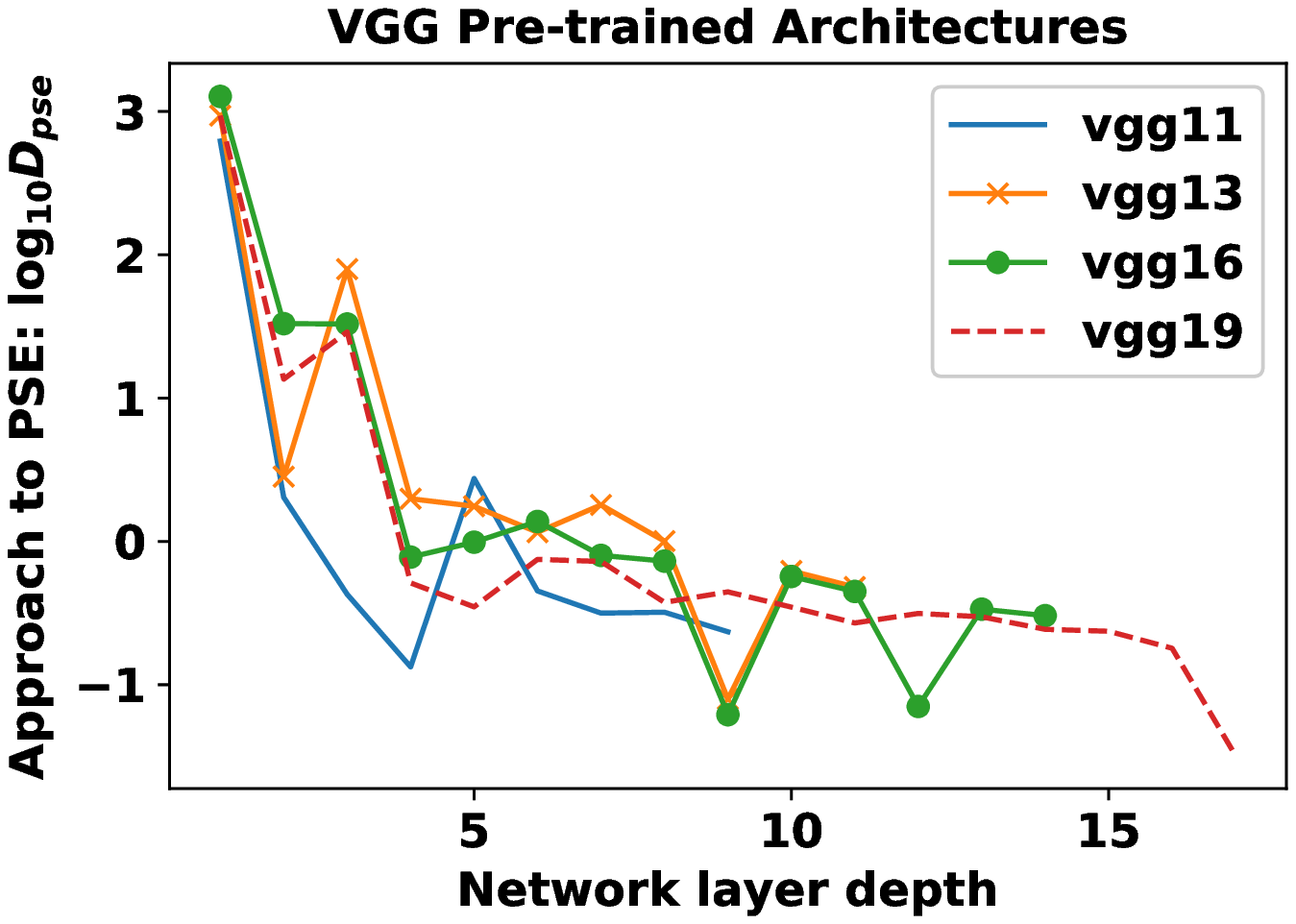}
  \includegraphics[width=0.5\textwidth]{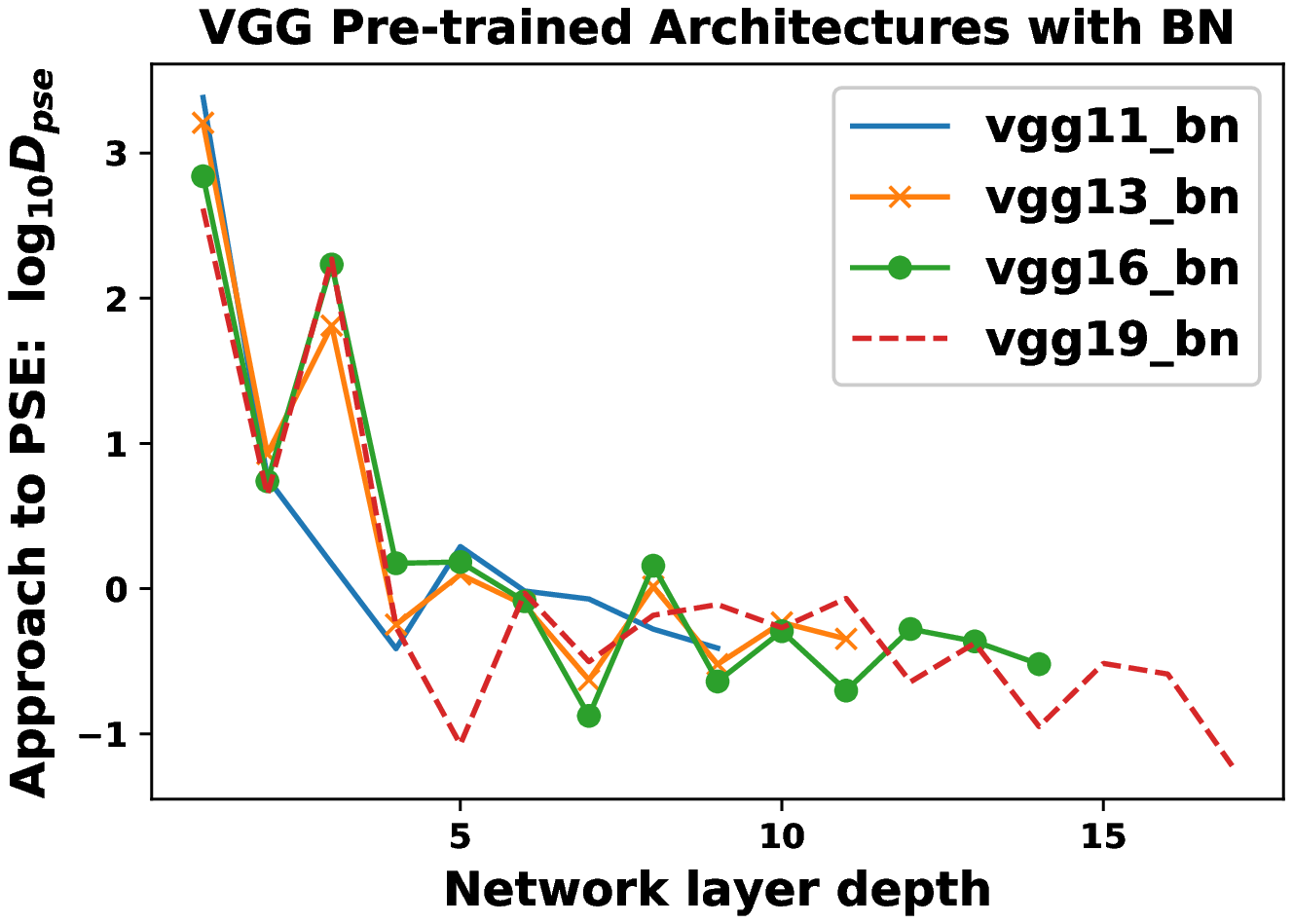}
  \caption{ $D_{pse}$, and for VGG architecture variants (left) and with batch normalisation (right).}
\end{figure}

Since $\Omega^{L}$ alone is a distribution, in order to get an approach to spectral ergodicity over increasing
depth we define a symmetric distance metric between two consecutive layers, $D_{pse}$, as follows,

\begin{equation}
\label{Dse}
D_{pse} = D_{pse}(N_{l}, N_{l+1}) = D_{KL}(\Omega^{l}|\Omega^{l+1}) + D_{KL}(\Omega^{l+1}|\Omega^{l}).
\end{equation}
The terms on the right are Kullbach-Leibler divergence between consecutive layers, one backwards, and they
are defined as follows, sums run over spectral density bins,
$$D_{KL}(\Omega^{l} | \Omega^{l+1}) = \sum_{k} \Omega^{l} \log_{2} \frac{\Omega^{l}}{\Omega^{l+1}},$$
$$ D_{KL}(\Omega^{l+1} | \Omega^{l}) = \sum_{k} \Omega^{l+1} \log_{2} \frac{\Omega^{l+1}}{\Omega^{l}}.$$

It is shown on circular complex matrix ensembles that increasing layer sizes decreases the value of $D_{pse}$.
$D_{pse}$ is defined to be a distance metric for {\it approach to spectral ergodicity} for DNNs.

\subsection{Cascading PSE for feed forward networks}

Complexity of an entire feed forward neural networks, possibily having many complicated connections as in convolution
units, can be identified with the {\it approach to spectral ergodicity} for a given $L$ layered network,
\begin{equation}
\label{cpse_def}
\mathscr{C}^{L} = \frac{1}{L} \sum_{l=1}^{L-1} \log_{10} D_{pse} (N_{l}, N_{l+1}),
\end{equation}
which is identified as cascading PSE (cPSE).

\begin{theorem}{Decreasing cPSE for feedforward networks} \\
\label{decreasing_cpse}
Given $L$ layered DNN and corresponding complexity cPSE $\mathscr{C}^{L}$, adding one
more layer to the DNN almost certainly will not increase the complexity measure cPSE,
$\mathscr{C}^{L} \ge \mathscr{C}^{L+1}$. Hence, lower cPSE implies higher complexity.

\end{theorem}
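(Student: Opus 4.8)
The plan is to reduce the desired inequality $\mathscr{C}^{L}\ge\mathscr{C}^{L+1}$ to a single comparison between the increment contributed by the newly appended layer and the current value of cPSE, and then to read off that comparison from the approach-to-spectral-ergodicity property used to justify (\ref{Dse}). Write $a_{l}=\log_{10}D_{pse}(N_{l},N_{l+1})$ and $S_{L-1}=\sum_{l=1}^{L-1}a_{l}$, so that by (\ref{cpse_def}) we have $\mathscr{C}^{L}=S_{L-1}/L$ and $\mathscr{C}^{L+1}=(S_{L-1}+a_{L})/(L+1)$. Subtracting,
$$\mathscr{C}^{L+1}-\mathscr{C}^{L}=\frac{L\,a_{L}-S_{L-1}}{L(L+1)},$$
so that $\mathscr{C}^{L}\ge\mathscr{C}^{L+1}$ holds if and only if $a_{L}\le S_{L-1}/L=\mathscr{C}^{L}$; i.e.\ it suffices to show that the log-divergence of the last consecutive pair of layers does not exceed the running cPSE.

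Next I would invoke two consequences of spectral ergodicity. First, because the cumulative distributions $\Omega^{l}$ in (\ref{omega}) form a convergent sequence in $l$ --- this stabilisation of the layer-averaged spectral densities around $\overline{\rho^{L}}$ is precisely what ``ergodicity'' asserts, and is exactly what the circular-ensemble evidence behind (\ref{Dse}) exhibits --- the symmetric Kullback--Leibler distances between consecutive members, $D_{pse}(N_{l},N_{l+1})$, are non-increasing in $l$; hence $(a_{l})$ is non-increasing and in particular $a_{L}\le a_{L-1}$. Second, in that same ergodic regime these divergences are small, so $D_{pse}(N_{l},N_{l+1})\le 1$ and therefore $a_{l}\le 0$ for the layers in question.

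These two facts close the loop. Since $a_{l}\ge a_{L-1}$ for every $l\le L-1$, we get $S_{L-1}\ge (L-1)\,a_{L-1}$, and since $a_{L-1}\le 0$ this yields
$$\mathscr{C}^{L}=\frac{S_{L-1}}{L}\ \ge\ \frac{L-1}{L}\,a_{L-1}\ \ge\ a_{L-1}\ \ge\ a_{L},$$
which is the reduced inequality from the first step; hence $\mathscr{C}^{L}\ge\mathscr{C}^{L+1}$. The interpretive clause ``lower cPSE implies higher complexity'' then follows from the paper's identification of deeper, more nearly ergodic networks with higher structural and internal-processing complexity. The genuine obstacle is the monotonicity claim $D_{pse}(N_{l},N_{l+1})\ge D_{pse}(N_{l+1},N_{l+2})$: a fully rigorous version needs quantitative control of how the spectral densities $\rho_{j}$ of the trained (or random-surrogate) layer matrices concentrate about their running mean, which is supported empirically but not available as an unconditional theorem, and the auxiliary sign condition $D_{pse}\le 1$ is likewise regime-dependent. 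These are exactly the configurations excluded by the qualifier ``almost certainly'', so I would present the argument as conditional on ergodic behaviour of the weight ensemble rather than as an identity valid for every conceivable layer sequence.
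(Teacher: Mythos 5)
Your proposal is correct and follows the same basic strategy as the paper's proof --- both reduce $\mathscr{C}^{L}\ge\mathscr{C}^{L+1}$ to a comparison between the new increment $a_{L}=\log_{10}D_{pse}(N_{L},N_{L+1})$ and the running average $S_{L-1}/L$, and both rest on the (empirically motivated, unproven) premise that the $a_{l}$ decrease as the ensemble approaches spectral ergodicity. Where you differ is in how that comparison is closed. The paper argues by contradiction and dismisses the offending inequality $\lambda < Dl_{L,L+1}$ by saying that both sides tend to zero for large $L$; as stated this is not a valid refutation (two sequences can both vanish while either one dominates), and it only purports to work asymptotically. You instead give a direct, non-asymptotic derivation, but to do so you need --- and explicitly state --- an extra hypothesis the paper never articulates: $a_{L-1}\le 0$, i.e.\ $D_{pse}\le 1$ in the regime considered. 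This condition is genuinely necessary, not a convenience: if the $a_{l}$ are positive and decreasing (e.g.\ $a_{1}=\cdots=a_{L-1}=1$, $a_{L}=0.999$, $L=100$), monotonicity alone gives $a_{L}<S_{L-1}/(L-1)$ but not the required $a_{L}\le S_{L-1}/L$, so the theorem can fail --- consistent with the paper's own observation that vgg11 violates monotonicity and with the positive cPSE values in Table \ref{corr}. So your version both repairs a logical gap in the published argument and makes visible exactly which regime the qualifier ``almost certainly'' is excluding; the price is one additional explicit assumption, which you correctly flag as regime-dependent rather than universal.
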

\begin{proof}
The approach to ergodicity requires decreasing values in distance
metric $\log_{10} D_{pse}$ for well behaved spectra.
Given
$$Dl_{l,l+1} = \log_{10} D_{pse} (N_{l}, N_{l+1})$$
then it follows
$$Dl_{1,2} > Dl_{2,3} > Dl_{3,4} > ...> Dl_{l-1, l} >  Dl_{l,l+1},.. $$ because of the fact that increasing ensemble
size $\Omega_{L}$ should approach to zero monotonically, leading to
decreasing symmetric distance on consecutive layers. This implies $Dl_{1,2} >>Dl_{L-1,L}$. \\ \\

Now, let's say $\mathscr{C}^{L+1}>\mathscr{C}^{L}$ contradicting the theorem,
this yields to
$$ \frac{1}{L} \sum_{l=1}^{L-1} Dl_{l,l+1} <   \frac{1}{L+1} \sum_{l=1}^{L} Dl_{l,l+1} $$
$$ \frac{L}{L^{2}+L} \sum_{l=1}^{L-1} Dl_{l,l+1} <   Dl_{L,L+1} $$
$$ \lambda <   Dl_{L,L+1} $$
is a condradiction as $L$ is very large both $Dl_{L, L+1}$ and $\lambda$ approaches to zero.
So $Dl_{L,L+1}$ can not be greater than $\lambda$ for large $L$.
Hence, the value of cPSE by adding one layer should not increase, $\mathscr{C}^{L+1} \le \mathscr{C}^{L}$. \\
\end{proof}

The assertion that increasing number of layers will decrease the complexity measure
$\mathscr{C}^{L}$ given in Theorem \ref{decreasing_cpse}. Interpretation of this
assertion follows a reverse effect. This means an increasingly complex deep network
will have a smaller $\mathscr{C}^{L}$ value. This agrees with the notion that removing a connection 
decreases complexity \cite{gallant88a}. However, this assertion is closely related to
the learning performance too. Decreasing value of $\mathscr{C}^{L}$ follows performance improvement,
i.e., complexity measure is directly correlated to learning performance. Once a network
reached {\it periodic spectral ergodicity}, very small $D_{pse}$ values, the designed
network's performance will not increase. This is a consequence of Theorem \ref{performance_cpse}.

\begin{proposition}{Performance and cPSE} \\
\label{performance_cpse}
Given an $L$ layered DNN, corresponding complexity cPSE $\mathscr{C}^{L}$ and generalisation
performance measure $\mathscr{P}^{L}$. Given a set of ordered series
$\mathscr{S}^{C} = (L, \mathscr{C}^{L})$ and  $\mathscr{S}^{P} = (L, \mathscr{P}^{L})$.
For a {large enough} set of values of $L$, $\mathscr{S}^{C}$ and $\mathscr{C}^{P}$ are almost perfectly
correlated.
\end{proposition}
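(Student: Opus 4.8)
The plan is to reduce the claim to the conjunction of two monotonicity statements in the common index $L$, plus a mild regularity hypothesis that ties both series to a single underlying scalar --- the \emph{degree of periodic spectral ergodicity} the trained network has attained.

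First I would record what Theorem \ref{decreasing_cpse} already provides: the map $L \mapsto \mathscr{C}^{L}$ is almost surely non-increasing, and strictly decreasing on the range of $L$ for which the per-layer increments $Dl_{l,l+1}$ are not yet negligible, converging thereafter because those increments tend to zero. So, up to its eventual plateau, $\mathscr{S}^{C}$ is a strictly monotone sequence in $L$. Next I would invoke --- as a hypothesis justified empirically for the architecture families studied here (ResNet, VGG) and consistent with the over-parametrisation literature \cite{behnam19a} and the original spectral-ergodicity argument \cite{suzen17a} --- that on the same window of depths the generalisation measure $\mathscr{P}^{L}$ is likewise monotone in $L$ (increasing if $\mathscr{P}$ is accuracy, decreasing if it is error) and saturates on the same plateau. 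The conceptual point to stress is that $\mathscr{C}^{L}$ and $\mathscr{P}^{L}$ are not independent observables: both are monotone functionals of how close the layer matrix ensemble is to spectral ergodicity, and added depth improves generalisation precisely by driving that ensemble toward ergodicity, which is exactly what forces $D_{pse}$, hence $\mathscr{C}^{L}$ via (\ref{cpse_def}), downward.

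From these two ingredients the conclusion follows. Two sequences that are each strictly monotone in the shared index $L$ are monotonically related to one another, so over any finite window of $L$-values their rank (Spearman) correlation is exactly $\pm 1$; and because on the relevant window both are smooth and slowly varying --- $\mathscr{C}^{L}$ being a Ces\`aro-type average by (\ref{cpse_def}), and the accuracy/error curve being empirically smooth --- the functional relation $\mathscr{C}^{L} \leftrightarrow \mathscr{P}^{L}$ is to first order affine, so the Pearson correlation coefficient lies within a curvature-controlled error of $\pm 1$: ``almost perfectly correlated'' in the stated sense. The sign is negative in the accuracy convention (lower cPSE, higher performance) and positive in the error convention, matching the discussion following Theorem \ref{decreasing_cpse} and the remark on removing connections \cite{gallant88a}.

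The main obstacle, and the reason this argument is inevitably semi-empirical rather than fully rigorous, is the second ingredient: monotone improvement of $\mathscr{P}^{L}$ with depth is simply false for unrestricted architectures --- plain very deep networks degrade --- so ``large enough set of values of $L$'' must be read as a window on which training still succeeds (residual connections, or the depths actually deployed in practice), a restriction I would state explicitly. A secondary, milder difficulty is upgrading rank correlation to Pearson correlation, which requires bounding the departure from linearity of the $\mathscr{C}$-versus-$\mathscr{P}$ relation; I would control this by the observed flatness of both curves near the ergodic plateau rather than attempt to derive it from first principles.
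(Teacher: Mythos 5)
The paper does not actually give a deductive proof of Proposition \ref{performance_cpse}: it is put forward as an assertion (the authors note explicitly that propositions were stated before generating data) and is then supported only empirically, via the correlations in Table \ref{corr} ($\rho=0.94$ for ResNet, $\rho=0.44$ for VGG, $\rho_{bn}=0.93$ with batch normalisation) and the surrounding discussion that decreasing $\mathscr{C}^{L}$ ``follows performance improvement.'' Your proposal is therefore a genuinely different and more ambitious route: you attempt to derive the correlation from Theorem \ref{decreasing_cpse} (monotone $\mathscr{C}^{L}$ in $L$) conjoined with an auxiliary hypothesis that $\mathscr{P}^{L}$ is monotone in $L$ on the same window. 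The logical skeleton is sound as far as it goes --- two sequences each monotone in a common index have Spearman correlation exactly $\pm 1$ on any finite window, and this transfers to a near-$\pm 1$ Pearson coefficient when the induced relation $\mathscr{C}^{L}\leftrightarrow\mathscr{P}^{L}$ is close to affine. What the paper's treatment buys is candour about epistemic status; what yours buys is an explicit isolation of the single extra premise that would turn the assertion into a theorem.

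That extra premise is also where the genuine gap lies, and to your credit you flag it yourself: monotone improvement of generalisation with depth is not derivable from anything in the paper, is false for unrestricted architectures, and once assumed makes the ``correlation'' conclusion nearly tautological --- two quantities each monotone in $L$ are trivially rank-correlated, which says nothing about a mechanistic link between spectral ergodicity and generalisation, which is what the proposition is meant to capture. Two further cautions. First, the upgrade from rank to Pearson correlation is not a technicality you can dismiss by appeal to flatness near the plateau: the paper's own data give $\rho=0.44$ for plain VGG, so the departure from ``almost perfect'' is demonstrably large for one of the two families studied, and any honest version of your argument must carry that curvature/noise term as an uncontrolled error. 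Second, your reading of Theorem \ref{decreasing_cpse} as yielding strict monotonicity before the plateau is slightly generous; the theorem (itself heuristic) asserts only non-increase ``almost certainly,'' and the experiments report that vgg11 violates even that. In sum, your argument is a reasonable formalisation of the paper's informal narrative, but it remains a conditional derivation resting on an imported empirical hypothesis, not a proof of the proposition as stated --- which is, in fairness, also the status the paper itself accords it.
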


It is observed that adding many more layers to a DNNs degrade the performance after certain depth
{\cite{paszke2019a}. We asserted that this discrepancy is a characteristics of
training and numerical difficulty, rather than a generalised theoretical property of
correspondance between learning and the architectures. Performance should not degrade
just to due to depth, if everything else fixed, as a logical inference from
Theorem \ref{performance_cpse}.

\begin{figure}
  \centering
  \label{fig:resnet}
  \includegraphics[width=0.5\textwidth]{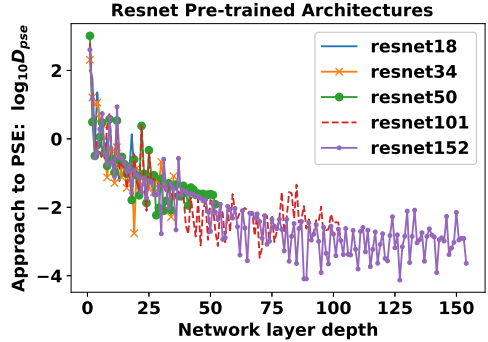}
  \caption{ $D_{pse}$ for ResNet architecture variants.}
\end{figure}

\subsection{Cascading PSE for forward architectures}

The extension of feedforward complexity $\mathscr{C}^{L}$ to any type of connectionist
architecure lies in how to treat branching connections, i.e., a layer connecting to
multiple layers. A basic rule to treat branches, compute complexity up to a branching
point and also up to end of the branch, the difference of the value would give the
branch complexity. By this rule one could compute cPSE for entire arbitrary architectures.

We demonstrate the computation of cPSE for arbitrary architecture with a dummy example.
The fork shaped architecture is identified with 15 layers, braching out at layer 3 to
3 different branches, see Figure \ref{fork}. Using layer ranges as subscripts to indicate
cPSE computation in between those layers, total cPSE for the fork architecture reads,
$$ \mathscr{C}^{L} = \mathscr{C}^{L}_{1-7} + \mathscr{C}^{L}_{1-11} +  \mathscr{C}^{L}_{1-15}
                     - 2 \cdot \mathscr{C}^{L}_{1-3}.$$
Automation of this branching rule might require efficient graph algorithms
for more complex architectures in detecting branches. For recurrent networks,
a transformed version of the architecture to non-recurrent topology is needed to
compute cPSE. Note that the fork architecture is a special case of fully connected feedforward architecture,
certain connections are removed. Fork architecture demonstrates multiple parallel streams.
Hence, layer types, convolutional, or residual connections can be handled by cPSE.

\begin{table}[]
\centering
\begin{tabular}{|l|r|r|r|}
\hline
Architecture  & Top-1 error  &  Top-5 error  & cPSE   \\ \hline
vgg11         & 30.98        &   11.37       & 0.04   \\ \hline
vgg13         & 30.07        &   10.75       & 0.41   \\ \hline
vgg16         & 28.41        &    9.63       & 0.14   \\ \hline
vgg19         & 27.62        &    9.12       &-0.10   \\ \hline
vgg11bn       & 29.62        &   10.19       & 0.38   \\ \hline
vgg13bn       & 28.45        &    9.63       & 0.36   \\ \hline
vgg16bn       & 26.63        &    8.50       & 0.18   \\ \hline
vgg19bn       & 25.76        &    8.15       &-0.07   \\ \hline
resnet18      & 30.24        &   10.92       &-0.19   \\ \hline
resnet34      & 26.70        &    8.58       &-0.74   \\ \hline
resnet50      & 23.85        &    7.13       &-1.03   \\ \hline
resnet101     & 22.63        &    6.44       &-1.77   \\ \hline
resnet152     & 21.69        &    5.94       &-2.29   \\ \hline
\end{tabular}
\
\caption{Classification performance and cPSE of investigated architectures. The correlation between
both classification performances and cPSE for ResNet ($\rho=0.94$), and for VGG ($\rho=0.44$ and $\rho_{bn}=0.93$
with batch normalisation).}
\label{corr}
\end{table}
\section{Experiments}

We tested our assertions and framework on real-life feed forward  architectures designed for
vision tasks: ResNet and VGG variants. Pre-trained weights on ImageNet classification task \cite{paszke2019a} 
are used to build {\it Layer Matrix Ensembles} to compute $D_{pse}$ over increasing layer depth
within the architecture variant. Overall network complexity $\mathscr{C}^{L}$ is computed
via mean $\log_{N} D_{pse}$ values and shows to be decreasing with depth.

\subsection{VGG architectures}

The convolutional network depth investigated for visual image classification \cite{simonyan14a}, so called
VGG architecures.  We have used pre-trained VGG in computing $D_{pse}$, approach to PSE over layers and single $cPSE$ measure
per variant. Results are summarized in Figure \ref{fig:vgg}. We observe the decreasing $D_{pse}$ values increasing depth.
The VGG variants and their corresponding batched normalised versions: {\it vgg11},  {\it vgg13}, {\it vgg16} and {\it vgg19}.
We observed that {\it vgg11}'s cPSE did not obey the monotonicity of cPSE. However batched normalised {\it vgg11} did obey.
VGG11 does not obey monotinicity because it might be too shallow for cPSE to capture the variation of spectral density.

\subsection{ResNet architectures}

A residual neural network (ResNet) brings state-of-the-art results in visual object classification
tasks by introducing layer skip procedures in training \cite{he15a}. We have used pre-trained
variants of ResNet:  {\it resnet18}, {\it resnet34}, {\it resnet50}, {\it resnet101} and
{\it resnet152} in computing $D_{pse}$, approach to PSE over layers and single $cPSE$ measure
per variant.

\section{Emprical evidence for correlation to performance}

We have compared cPSE values for each architecure from our ResNet and VGG variants
against their misclassification test errors. We found very strong correlation between
classification performance on a test dataset and cPSE for the ResNet and VGG.  Correlations remain the
same for both top-1 and top-5 errors. We attribute the drop in correlation for
VGG poorer training approach and the effect of batch normalisation.
The summary is given in Table \ref{corr}, classification performance and cPSE of investigated architectures.
The correlation between both classification performances and cPSE for ResNet $\rho=0.94$
for VGG $\rho=0.44$ and $\rho_{bn}=0.93$ with batch normalisation. Results show 
empirical evidence supporting Proposition \ref{decreasing_cpse}. The higher the depth the smaller the cPSE
complexity measure. 

\section{Conclusion}

We developed a complexity measure for arbitrary neural network architecures
based on {\it spectral ergodicity} defined on learned weights among layers so called
cPSE measure. This measure has given consistent numerical results on the ResNet and VGG
architectures. We also sketch how to compute cPSE with branched networks, i.e. arbitrary
architectures. cPSE provided a quantitative relation between architecture and learning
performance in our test. Moreover, we provided mathematical properties of cPSE, such
as monotonicity.

In this work, we explain the success of deeper neural networks via theoretically
founded structural complexity measure. Going beyond theoretical understanding,
proposed complexity measure for DNNs is simple to implement and it can be directly
used in practice with a minimal effort. Usage of our complexity measure can also be
embedded into NAS methods.  For this reason, we sketch how to compute cPSE for
arbirarily branched networks. Search strategies can use cPSE in deciding how to grow
a network or in genetic algorithms settings how to mutate to a new structure 
leveraging the cPSE.

We also assert that cPSE can provide hints to explain why deep neural networks generalize 
better with over-parametrization. 

\section*{Acknowledgements and Notes}
We would like to express our gratitute to Charles Martin for pointing us out the
usage of pre-trained weights from $pytorchvision$ and thank the PyTorch core team \cite{paszke2019a}
for bundling these datasets neatly. We did not practice so called HARKing 
in this work: HARK issues in machine learning research \cite{gencoglu19a}. 
Propositions are asserted first before generating the data. The supplementary software code
$cpse\_manuscript\_results\_initial\_dev.ipynb$ is provided for reproducing the results. The cPSE measure
is also available in Bristol python package \cite{bristolpy} and an example usage is 
also provided in $cpse\_bristol\_package\_usage.ipynb$ found in the code supplement.
\\ \\
Authors do not have any competing or other kind of conflict of interests.
\\ \\
\bibliography{pse}
\end{document}